\definecolor{lightblue}{RGB}{70, 180, 240}
\newcommand{\cmark}{\ding{51}}
\newcommand{\xmark}{\ding{55}}
\newcommand{\filip}{Filip Naudot\thanks{\texttt{filipn@cs.umu.se}} \\ Umeå University}
\newcommand{\tobias}{Tobias Sundqvist\thanks{\texttt{tobias.sundqvist@tietoevry.com}} \\ Tietoevry}
\newcommand{\timotheus}{Timotheus Kampik\thanks{\texttt{tkampik@cs.umu.se}} \\ Umeå University \& SAP}
\title{llmSHAP: A Principled Approach to LLM Explainability}
\author{
    \filip
    \and
    \tobias
    \and
    \timotheus
}
\date{} 
\begin{document}
\maketitle

\begin{abstract}
Feature attribution methods help make machine learning-based inference \emph{explainable} by determining how much one or several features have contributed to a model's output.
A particularly popular attribution method is based on the Shapley value from cooperative game theory, a measure that guarantees the satisfaction of several desirable principles, assuming deterministic inference.
We apply the Shapley value to feature attribution in large language model (LLM)-based decision support systems, where inference is, by design, stochastic (\emph{non-deterministic}).
We then demonstrate when we can and cannot guarantee Shapley value principle satisfaction across different implementation variants applied to LLM-based decision support, and analyze how the stochastic nature of LLMs affects these guarantees.
We also highlight trade-offs between explainable inference speed, agreement with exact Shapley value attributions, and principle attainment.
\end{abstract}

\section{Introduction}
\label{section:introduction}
The rise of data-driven algorithms and, most notably, applications of deep learning has led to concerns about a lack of thorough human  oversight of socially important decisions that are either delegated in their entirety to machines, or made by humans based on machine recommendations.
Explainable AI (XAI) approaches attempt to mitigate these concerns by helping (typically human) users understand how and why algorithms produce specific outputs~\cite{oneil2016weapons}.
An important class of XAI methods focuses on providing explanations of black-box classifiers that attribute classification outcomes (which one may consider \emph{decisions} or \emph{decision recommendations}) to input characteristics (feature values) \cite{DBLP:conf/kdd/Ribeiro0G16,lundberg2017SHAP}.
Such \emph{feature attribution} methods can be considered meta-reasoning functions that approximate classifier behavior with the objective of providing users a reasonably faithful intuition of behavioral fundamentals.
One of the most prominent feature attribution methods is \emph{SHAP}, which is based on the Shapley value in cooperative game theory that quantifies players' (feature values') contributions to coalition utility (classification outcomes) \cite{shapley1953}.
Feature attribution methods have, in general, limitations: notably, they are necessarily approximations, and as purely technical tools, they cannot fully consider crucial nuances of the socio-technical systems they are embedded in~\cite{DBLP:journals/ai/Miller19}; for example, the visualizations provided out-of-the-box by feature attribution software libraries may be difficult to interpret~\cite{DBLP:conf/chi/KaurNJCWV20}.
Still, Shapley value-based approaches can be considered a reasonable choice for facilitating black-box explainability, notably because (i) they are based on well-established and intuitive mathematical principles of the Shapley value and (ii) there is at least some evidence of their potential usefulness, also relative to alternative approaches~\cite{DBLP:conf/chi/KaurNJCWV20}.
However, the Shapley value cannot straight-forwardly be applied to inference from Large Language Models (LLMs), which power many of the currently emerging AI applications. The reasons for this are two-fold:
\begin{enumerate}
  \item While the Shapley value assumes deterministic inference function behavior, LLM behavior tends to be deliberately stochastic in most application scenarios. This means the satisfaction of Shapley value principles is no longer guaranteed.
  \item Computing Shapley values is computationally expensive and so is LLM inference.
\end{enumerate}
In this paper, we address these challenges by providing an overview of fundamental trade-offs that need to be made when implementing Shapley value-based approaches to LLM-based decision support explainability.
These trade-offs are grounded in the theoretical foundations of the Shapley value and supported by empirical results that illustrate how design choices may play out in practice.
Our overview can help researchers to study ``downstream'' effects of different design choices, e.g., in specific applications or human-computer interaction lab studies.
Practitioners may use our overview, alongside our open-source reference implementation (available at \url{https://github.com/filipnaudot/llmSHAP}), to make informed decisions when implementing Shapley values to facilitate explainability of real-world systems.
As prerequisites, we first explain the Shapley value, its principles, and its application to XAI (Section~\ref{section:shapley-value}).
We then introduce new foundations of Shapley values for LLMs (llmSHAP), assuming stochastic inference (Section~\ref{section:methodology}).
These foundations reflect different implementation approaches for llmSHAP variants.
Given these foundations, we make theoretical observations about principles (Section~\ref{section:axiomatic-analysis}) and computational complexity (Section~\ref{section:complexity-analysis}), and show empirically how these differences may play out in practice (Section~\ref{section:empirical-evaluation}).
Finally, we provide a brief discussion of llmSHAP in the light of related work (Section~\ref{section:related-work}) before we conclude by summarizing key design decisions that are required when utilizing the Shapley value for LLM explainability (Section~\ref{section:conclusion}).

\section{The Shapley Value}
\label{section:shapley-value}
The Shapley value (introduced in \cite{shapley1953}) is a game theoretic approach for distributing the total payoff from a group of players to all individual players in the group. This is achieved by averaging a feature's marginal contribution across all possible feature coalitions. For every coalition $S \subseteq X$ (where $X$ denotes the set of all features) not containing $x \in X$, the payoff \emph{with} and \emph{without} $x$ is compared, given the payoff function $v: 2^X \rightarrow \mathbb{R}$. The permutation-weighted average of these differences is then $x$'s Shapley value. Formally, the Shapley value is defined as:
\begin{equation}\label{eq:original_shapley_value}
    \begin{aligned}
        \phi_{i}(v) =
         \sum_{S \subseteq X \setminus \{x_{i}\}} \frac{|S|! \cdot (|X|-|S|-1)!}{|X|!} \left[ v\bigl(S \cup \{x_{i}\}\bigr) - v\bigl(S\bigr) \right].
    \end{aligned}
\end{equation}
\autoref{fig:shapley-illustration} provides a simplified illustration of this process.
\begin{figure}[!ht]
    \centering
    \includegraphics[width=0.5\linewidth]{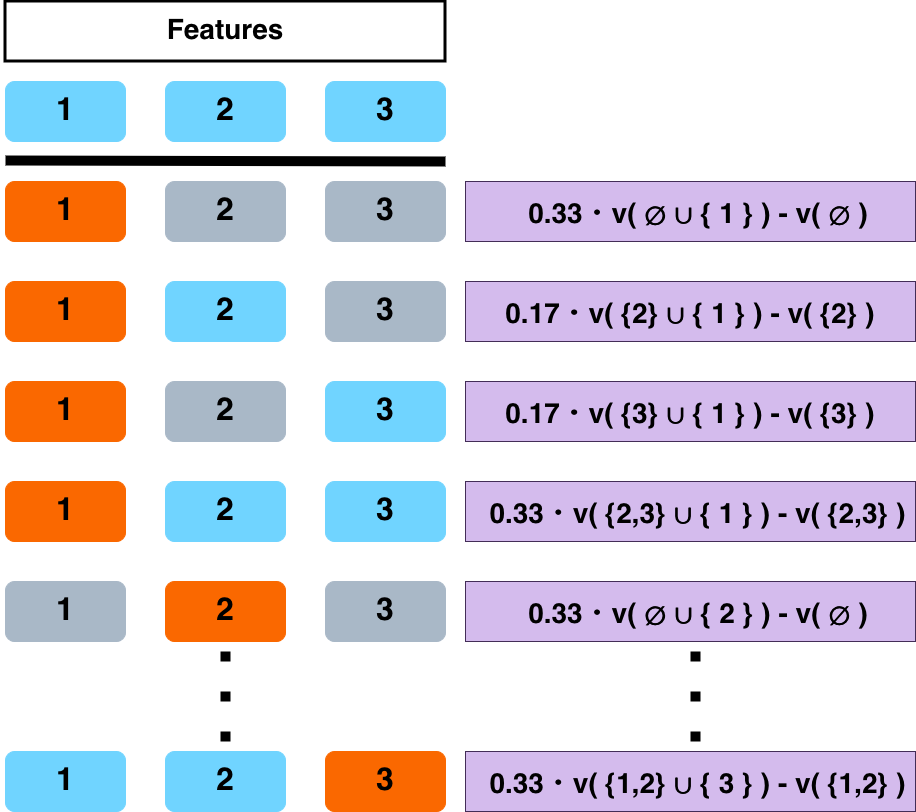}
    \caption{Illustration of Shapley value computation for each feature $x_i \in X$, with the feature of interest in \emph{\textcolor{orange}{orange}}, coalition features ($S$) in \emph{\textcolor{lightblue}{blue}}, and excluded features in \emph{gray}. For each coalition $S \subseteq X \setminus \{x_i\}$ (blue), subtract $v(S)$ from $v(S \cup \{x_i\})$. Weigh each marginal contribution by $\frac{|S|!(|X|-|S|-1)!}{|X|!}$, and sum to obtain $\phi_i(v)$ in \autoref{eq:original_shapley_value}.}
    \label{fig:shapley-illustration}
\end{figure}

The Shapley value uniquely satisfies a set of axioms.
Next, we state these axioms, which we will use as guiding desiderata later in our analyses.

\begin{axiom}[Efficiency]\label{axiom:efficiency}
    The total of all players Shapley values equals the value of the grand coalition, ensuring the entire gain is allocated among all players. Formally:
    \[
        \sum_{i \in X} \phi_i(v) = v(X).
    \]
\end{axiom}

Having fixed the total amount to be shared among players, we next present the requirement that players who are indistinguishable in their contributions be treated the same.

\begin{axiom}[Symmetry]\label{axiom:symmetry}
    Players that contribute equally to all coalitions receive equal attribution. Formally, if $v(S \cup \{x_{i}\}) = v(S \cup \{x_{j}\})$ for all $S \subseteq X \setminus \{x_{i}, x_{j}\}$, then $\phi_i(v) = \phi_j(v)$.
\end{axiom}

After accounting for equal contributions, we finally state the minimal fairness condition that a player who adds nothing should receive nothing.

\begin{axiom}[Null player (dummy)]\label{axiom:dummy}
    A player that does not contribute to any coalition receives a Shapley value of zero. Formally, if $v(S \cup \{x_{i}\}) = v(S)$ for all $S \subseteq X \setminus \{x_{i}\}$, then $\phi_i(v) = 0$.
\end{axiom}

Because we never form linear combinations of games, \emph{law of aggregation} (Axiom 3 in \cite{shapley1953}), otherwise known as \emph{linearity}, has no operational role in this work.

\section{Shapley Values for LLMs}
\label{section:methodology}
LLM decoding is typically stochastic: outputs are sampled from a distribution rather than chosen greedily; \emph{temperature} rescales that distribution and \emph{top-p} sampling restricts sampling to the smallest set of tokens whose cumulative probability exceeds the threshold \emph{p}. This matters because some Shapley value axioms, that hold in deterministic environments, can fail under stochastic redraws.
In this work, we focus on single-turn LLM calls where the input is a subset of the prompt (e.g., tokens/sentences/concepts) and the model's output is mapped to a scalar \emph{payoff}.

\paragraph{Setup and notation.}
In the case of LLMs, we do not have a value function $v: 2^X \rightarrow \mathbb{R}$; instead, we draw, given $S \subseteq X$, $n$ samples $d_1, ..., d_n$ from a distribution $\mathcal{D}(S)$, such that for $1 \leq i \leq n$, $d_i \in \mathbb{R}$, and aggregate them, yielding a real number. We denote this process by $h_{n}(\cdot)$, where $n$ denotes the number of samples we draw.
We may drop the subscript $n$ when the number of draws is left unspecified.
Here, we assume that the aggregation function is the average.
In what follows, we write our attribution functions as $\shap(h, X, x)$; when $h$ and $X$ are arbitrary or clear from context, we abbreviate to $\shap(x)$.

\paragraph{LLM-based Inference.}
We assume that we draw inferences from an LLM by prompting it one or several times and potentially post-processing the output.
For example, one can compute the sentiment of the output and use that score as the coalition's payoff, 
or compare each coalition's output to a reference output from the full feature set (e.g., cosine similarity between embeddings) 
and use that similarity as the basis for the payoff. Our experiments use the latter.
Drawing this inference amounts to sampling from the distribution $\mathcal{D}(S)$ as described above.

One of the most straight-forward approaches to feature attribution is to consider what happens when each feature is removed individually. 
This \emph{leave-one-out} strategy evaluates the effect of ablating a single feature at a time, while keeping all others fixed. The resulting feature attribution correspond to counterfactual reasoning: it measures how much the model's output changes when a particular feature is absent.
%
%
Formally, the counterfactual attribution for a feature $x \in X$ is defined as:
\begin{equation}\label{equation:counterfactual}
    \shapC(h, X, x) = h(X) - h(X \setminus \{x\}).
\end{equation}

This simple approach illustrates the intuition behind feature ablation. However, this method is limited in scope, motivating a more principled extension through the Shapley value framework. To formalize this process, we first introduce an inference wrapper in Algorithm~\ref{alg:inference} that standardizes LLM calls and optionally caches results, ensuring consistent coalition evaluations.
\begin{algorithm}[H]
    \caption{Inference wrapper $\ifunc$ with optional caching}
    \label{alg:inference}
    \begin{algorithmic}[1]
        \Require Coalition $S$; inference function $h$;
            boolean \textit{use\_cache} \emph{(default: \textit{False})};
            order-invariant cache $\cache$ \emph{(default: $\{\}$)}
        \Ensure Scalar payoff $R \in \mathbb{R}$; if \textit{use\_cache} then on return $\cache[S]=R$
        \Function{$\ifunc$}{$S$, \textit{use\_cache}, $\cache$}
            \If{\textit{use\_cache}} \label{line:start-cache-lines}
                \If{$S \in \cache$}
                    \State $R \gets \textit{cache}[S]$ \Comment{Retrieve inference result from cache}
                    \State \Return $R$
                \Else
                    \State $R \gets h(S)$ \Comment{Inference result for features in $S$} \label{line:inference-call}
                    \State $\cache[S] \gets R$ \Comment{Cache inference result}
                    \State \Return $R$
                \EndIf \label{line:end-cache-lines}
            \Else
                \State \Return h(S)
            \EndIf
        \EndFunction
    \end{algorithmic}
\end{algorithm}
The Shapley value attribution method $\shapS$ follows directly from \autoref{eq:original_shapley_value}, with $v$ replaced by $\ifunc$ (from Algorithm~\ref{alg:inference}), and is described in Algorithm~\ref{algo:shapley}.
%
%
\begin{algorithm}[H]
    \caption{$\shapS$: Shapley value feature attribution for all $x \in X$}
    \label{algo:shapley}
    \begin{algorithmic}[1]
        \Require Feature set $X$, similarity function $\kappa$, inference wrapper $\ifunc$
        \Ensure Mapping $x \mapsto \shapS(h,X,x)$ for all $x \in X$
        \State $\textit{attribution} \gets \{\}$ \Comment{Initialize empty attribution map}
        \For{\textbf{each} $x \in X$}
            \State $value \gets 0$ \Comment{Accumulator for $\shapS(h,X,x)$}
            \For{\textbf{each} $S \subseteq X \setminus \{x\}$} \label{line:for-each-feature}
                \State $R_{with} \gets \ifunc(S \cup \{x\})$ \Comment{Inference result for features in $S \cup \{x\}$}
                \State $R_{without} \gets \ifunc(S)$ \Comment{Inference result for features in $S$}
                \State $weight \gets \frac{|S|!\,(|X|-|S|-1)!}{|X|!}$
                \State $value \gets value + weight \cdot \big(R_{with} - R_{without}\big)$
            \EndFor
        \State $\textit{attribution}[x] \gets value$
        \EndFor
        \State \Return $\textit{attribution}$
    \end{algorithmic}
\end{algorithm}
%
%
As repeated LLM inference calls can become a computational bottleneck, caching LLM responses offers a practical way to reduce calls while still aligning with the theoretical definition. We achieve this by introducing an order-invariant cache $\cache$ and replacing $\ifunc(\cdot)$ with $\ifunc(\cdot, \textit{use\_cache=True}, \cache)$ in Algorithm~\ref{algo:shapley} and denote the resulting approach by $\shapCS$.

However, even with caching, the number of coalitions still grows exponentially with the number of features. To address this, we define a sliding-window approach $\shapSW$ that constrains the number of coalitions. Intuitively, the sliding-window approach can be seen as progressing between counterfactual and standard Shapley attributions. When the window size is 1, it reduces to the counterfactual case where each feature is ablated individually, while a window size equal to the number of input features recovers the standard Shapley value computation over the full feature coalition space.

Given a set of features $X = \{x_1, x_2, \ldots, x_n\}$, and a window $W_i = \{x_i, x_{i+1}, \dots, x_{i+w-1}\}$, that slides over all features in $X$. For $S \subseteq W_i$, let $S' = S \cup X \setminus W_i$ be the set of features containing the current coalition $S$ and all features outside of the window $W_i$. We can now compute the local Shapley value for the features within $W_i$ at each step $i$ as described in Algorithm~\ref{algo:sliding-window}. 

%
%
\begin{algorithm}[H]
    \caption{$\shapSW$: sliding window-based Shapley value feature attribution for all $x \in X$}
    \label{algo:sliding-window}
    \begin{algorithmic}[1]
        \Require Feature set $X$, window size $w$, inference wrapper $\ifunc$
        \Ensure Mapping $x \mapsto \phi^{\textit{SW}}_{w}(h,X,x)$ for all $x \in X$
        \State $\textit{attribution} \gets \{\}$ \Comment{Initialize empty attribution map}
        \State $\textit{count} \gets \{\}$ \Comment{Initialize accumulator for averaging}
        
        \For{\textbf{each} $x \in X$} \State $\textit{attribution}[x] \gets 0$;\; $\textit{count}[x] \gets 0$ \EndFor
        
        \State $\textit{num\_windows} \gets n-w+1$
        
        \For{$i = 1$ \textbf{to} \textit{num\_windows}} \Comment{Stride $1$ sliding window}
            \State $W \gets \{x_i,\dots,x_{i+w-1}\}$;\ \Comment{Initialize the current window}
            \State $X_{out} \gets X \setminus W$ \Comment{Store outside-of-window features} \label{line:init-fixed-features}
            \For{\textbf{each} $x \in W$}
                \State $local \gets 0$
                \For{\textbf{each} $S \subseteq W \setminus \{x\}$} \label{line:local-coalitions}
                    \State $S' \gets S \cup X_{out}$ \Comment{Always include outside features} \label{line:add-fixed-features}
                    \State $R_{\text{with}} \gets\ifunc(S' \cup \{x\})$ \Comment{Inference result for features in $S' \cup \{x\}$}
                    \State $R_{\text{without}} \gets\ifunc(S')$         \Comment{Inference result for features in $S'$}
                    \State $weight \gets \frac{|S|!\, (|W|-|S|-1)!}{|W|!}$
                    \State $local \gets local + weight \cdot \big(R_{\text{with}} - R_{\text{without}}\big)$
                \EndFor
                \State $\textit{attribution}[x] \gets \textit{attribution}[x] + local$
                \State $\textit{count}[x] \gets \textit{count}[x] + 1$
            \EndFor
        \EndFor
        \For{\textbf{each} $x \in X$}
            \State $\textit{attribution}[x] \gets \textit{attribution}[x] / \textit{count}[x]$ \Comment{Average attribution}
        \EndFor
        \State \Return $\textit{attribution}$
    \end{algorithmic}
\end{algorithm}
%

\section{Axiomatic Analysis}
\label{section:axiomatic-analysis}
In what follows, we analyze our Shapley value-based attribution methods with respect to the classical axioms when applied to LLMs. We take the ``no-information'' baseline to be $h(\emptyset)$ (as in \cite{lundberg2017SHAP}) rather than forcing $v(\emptyset) = 0$ (as in the original paper introducing Shapley values \cite{shapley1953}).
Intuitively, one might argue that if there are no features, there can be no payoff. However, in the case of LLMs, an empty string is still a valid input. Therefore, we measure the model's behavior when none of the original features are present (an empty string) and use this as the reference point. A summary of the results is presented in Table~\ref{tab:axiomatic-compliance-summary}.
\begin{table}[ht!]
    \centering
    \renewcommand{\arraystretch}{1.3}
    \begin{tabular}{c|c|c|c}
        Method & Efficiency & Symmetry & Null player (dummy)\\
        \hline
        $\shapS$ & \xmark & \cmark & \cmark \\
        \hline
        $\shapCS$ & \cmark & \cmark &  \cmark \\
        \hline
        $\shapSW$ & \xmark & \xmark & \cmark \\
        \hline
        $\shapC$ & \xmark & \cmark & \cmark \\
    \end{tabular}
    \caption{Axiomatic compliance of attribution methods (\cmark\ and \xmark\ denotes satisfied and violated axioms, respectively).
    Here, $\shapS$ denotes the Shapley value, $\shapCS$ the cache-based Shapley value, $\shapSW$ the sliding window-based Shapley value, and $\shapC$ the counterfactual.}
    \label{tab:axiomatic-compliance-summary}
\end{table}

Under a deterministic $h$, which our cache-based method $\shapCS$ enforces by caching and reusing generated model outputs, Shapley's axioms apply verbatim to $h$, and the axioms are thus satisfied naturally for $\shapCS$.

\begin{proposition}
    $\shapCS$ satisfies the Shapley axioms efficiency \ref{axiom:efficiency}, symmetry \ref{axiom:symmetry}, and Null player (dummy) \ref{axiom:dummy}.
\end{proposition}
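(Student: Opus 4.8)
The plan is to reduce $\shapCS$ to an ordinary, deterministic Shapley computation and then invoke the classical axiomatic arguments. The crucial observation is that the order-invariant cache $\cache$ in Algorithm~\ref{alg:inference} collapses the stochasticity: the first time a coalition $S$ is queried, $\ifunc$ draws its samples, aggregates them, and stores the scalar $\cache[S]$; every later query for the same $S$ returns that stored value. Since the cache is keyed on the \emph{set} $S$ (not on an ordering, nor on the path by which $S$ arose), the map $v(S) := \ifunc(S, \textit{use\_cache}=\textit{True}, \cache)$ is a well-defined function $v : 2^{X} \to \mathbb{R}$ for the duration of a single run of Algorithm~\ref{algo:shapley}. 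First I would make this precise, arguing that within one invocation the values $R_{with}$ and $R_{without}$ read off consistently across features (the coalition $S \cup \{x\}$ evaluated for one feature coincides with a coalition evaluated for another), so that $\shapCS(h,X,x)$ computes exactly the Shapley value $\phi_i(v)$ of \autoref{eq:original_shapley_value} for this fixed deterministic $v$.

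Once determinism is established, the three axioms follow from the standard Shapley arguments applied to $v$. For \textbf{Null player} (Axiom~\ref{axiom:dummy}), the hypothesis $v(S \cup \{x_i\}) = v(S)$ for all $S$ makes every bracketed marginal contribution vanish, so the weighted sum defining $\phi_i(v)$ is $0$. For \textbf{Symmetry} (Axiom~\ref{axiom:symmetry}), I would exhibit the bijection on subsets $S \subseteq X \setminus \{x_i, x_j\}$ that swaps the roles of $x_i$ and $x_j$; since the weights depend only on $|S|$, the hypothesis matches each weighted term of $\phi_i$ with an equal term of $\phi_j$, giving $\phi_i(v) = \phi_j(v)$. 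For \textbf{Efficiency} (Axiom~\ref{axiom:efficiency}), I would rewrite $\phi_i(v)$ in its permutation form, so that for each ordering of $X$ the marginal contributions telescope, and averaging over all orderings preserves the total.

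The step I expect to require the most care is reconciling Efficiency with the baseline convention announced just before the proposition. The telescoping argument yields $\sum_{i \in X} \phi_i(v) = v(X) - v(\emptyset)$, whereas Axiom~\ref{axiom:efficiency} is stated as $\sum_{i \in X} \phi_i(v) = v(X)$. Because we adopt $h(\emptyset)$ as the no-information reference rather than forcing $v(\emptyset) = 0$, I would read Efficiency in its baseline-relative (SHAP-style) form, i.e. the players jointly receive the gain of the grand coalition \emph{over} the empty-string baseline, $v(X) - v(\emptyset)$; the two statements coincide exactly when $v(\emptyset) = 0$. I would therefore state the efficiency conclusion in this relative form and note that Symmetry and Null player are insensitive to the choice of baseline, so the reduction to a deterministic $v$ settles them without qualification.
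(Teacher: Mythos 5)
Your proposal is correct and takes essentially the same route as the paper: both reduce $\shapCS$ to the exact Shapley value of the deterministic characteristic function induced by the order-invariant cache, invoke the classical arguments for symmetry and the null player, and resolve efficiency by reading it relative to the $h(\emptyset)$ baseline so that $\sum_{i \in X} \phi_i = h(X) - h(\emptyset)$. Your version merely makes explicit a few steps the paper leaves implicit (well-definedness of the cached value function across features, the bijection for symmetry, and the telescoping for efficiency).
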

\begin{proof}
    Since $\shapCS$ is exactly the Shapley value computed on the characteristic function $h$, the original proofs for \emph{symmetry}, and null player (dummy) (from \cite{shapley1953}) apply with $v$ replaced by $h$. The cache mechanism in $\shapCS$ changes only how $h(S)$ is obtained, therefore the same conclusion follows. For efficiency, observe that subtracting the constant $h(\emptyset)$ from the value of all coalitions does not change any marginal contribution, therefore:
    \[
        h(X) = h(\emptyset) + \sum_{i \in X} \shapCS_i(h),
    \]
    i.e., efficiency is interpreted relative to the baseline $h(\emptyset)$ (and thus cancels out).
\end{proof}

In contrast, under the desired probabilistic decoding regime (using $\shapS$), calls to $h(S)$ are stochastic across evaluations of the same $S$. This breaks the telescoping argument behind efficiency at the sample-level.
\begin{proposition} ($\shapS$ provides no guarantees for efficiency under independent redraws).
    If each occurrence of the same coalition $S$ is evaluated by a stochastic draw from $h(S)$, then there are no guarantees that we will observe:
    \[
         \sum_{x \in X} \shapS(x) = h(X) - h(\emptyset).
    \]
\end{proposition}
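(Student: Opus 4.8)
The plan is to contrast the exact \emph{telescoping identity} that underlies deterministic efficiency with what survives when every coalition query is a fresh, independent draw. First I would rewrite $\sum_{x \in X} \shapS(x)$ by expanding each attribution via \autoref{eq:original_shapley_value} (with $v$ replaced by $h$) and regrouping the doubly-indexed sum not by feature but by the coalition whose payoff is being queried. For a fixed intermediate coalition $T$ with $\emptyset \neq T \neq X$, its payoff $h(T)$ enters positively once for every $x_i \in T$ (as a ``with'' term, where $S = T\setminus\{x_i\}$) and negatively once for every $x_i \notin T$ (as a ``without'' term, where $S = T$).

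Next I would record the standard weight cancellation. Writing $w_s = \frac{s!\,(|X|-s-1)!}{|X|!}$, the net coefficient attached to $h(T)$ is $|T|\,w_{|T|-1} - (|X|-|T|)\,w_{|T|}$, and a one-line factorial simplification shows both products equal $\frac{|T|!\,(|X|-|T|)!}{|X|!}$, so the coefficient is exactly zero; only $h(X)$ (coefficient $+1$) and $h(\emptyset)$ (coefficient $-1$) survive. This is precisely the telescoping that yields $\sum_x \shapS(x) = h(X) - h(\emptyset)$ in the deterministic regime, but the cancellation is purely \emph{algebraic}: it presupposes that the positive and negative occurrences of $h(T)$ denote one and the same number.

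The crux is that independent redraws invalidate this presupposition. Under Algorithm~\ref{algo:shapley} without caching, each invocation of $\ifunc$ returns an \emph{independent} sample from $\mathcal{D}(S)$, so every one of the $|T|$ positive and $|X|-|T|$ negative occurrences of coalition $T$ is a distinct random variable rather than a shared constant. Regrouping as above, the intended cancellation of the intermediate terms is replaced by a residual $\Delta = \sum_x \shapS(x) - \big(h(X)-h(\emptyset)\big)$ that is a linear combination of \emph{differences of i.i.d.\ draws from the same distribution}. To make the failure concrete I would instantiate $|X|=2$: there $\shapS(x_1)+\shapS(x_2)$ contains the leftover $\tfrac12\big[(a_1-a_2)-(b_1-b_2)\big]$, where $a_1,a_2$ and $b_1,b_2$ are the independent draws of $h(\{x_1\})$ and $h(\{x_2\})$ that were supposed to pair up and cancel. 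This leftover is a nontrivial random variable, and since $\Delta$ is a combination of independent non-degenerate terms it is itself non-degenerate, so $\Pr[\Delta = 0] < 1$ (indeed, on a measure-zero event for continuous, non-degenerate $\mathcal{D}$). Hence the equality cannot be guaranteed.

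I expect the main obstacle to be conceptual rather than computational: one must emphasize that the statement is about \emph{sample-level} equality, because the claim is in fact \emph{true in expectation}. If $\mathbb{E}[h(T)] = \mu(T)$ for every occurrence, linearity of expectation reinstates the telescoping on the means and gives $\mathbb{E}[\Delta]=0$, so a naive ``compute the expectation'' argument would wrongly suggest efficiency holds. The proposition therefore hinges on showing $\mathrm{Var}(\Delta) > 0$ rather than $\mathbb{E}[\Delta]\neq 0$; the care lies in exhibiting at least one intermediate coalition whose positive and negative occurrences are drawn independently and whose sampling distribution has positive variance, which is exactly what the $|X|=2$ witness supplies.
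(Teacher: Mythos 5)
Your proposal is correct and follows essentially the same route as the paper's proof: the telescoping cancellation of intermediate coalition payoffs, which yields efficiency in the deterministic case, breaks down because each positive and negative occurrence of a coalition is an independent draw, leaving a non-degenerate residual that is zero with probability less than one. Your write-up is in fact more careful than the paper's --- you track the true multiplicities ($|T|$ positive and $|X|-|T|$ negative occurrences, rather than ``once each''), verify the weight cancellation explicitly, supply a concrete $|X|=2$ witness, and correctly flag that the identity still holds in expectation --- but the underlying argument is the same.
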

\begin{proof}
    In the inner sum over marginal contributions for a coalition of the features $X$, every intermediate coalition $S$ appears once with a positive sign and once with a negative sign in front of it. These are two independent draws $A_{with}, A_{without}$ (where $A_{with} = A_{without}$) from the distribution $\mathcal{D}(S)$. Unless $h$ is deterministic, $h(A_{with}) = h(A_{without})$ does not hold in general (i.e., it happens with a probability $< 1$ as illustrated in Figure~\ref{fig:distribution-illustration}). So the two terms $h(A_{with})$ and $h(A_{without})$ do not cancel each other out, leaving extra non-zero residual terms in the sum. Hence, we get:
    \[
         \sum_{x \in X} \shapS(x) \neq h(X) - h(\emptyset),
    \]
    with a probability $> 0$.
\end{proof}
\begin{figure}[ht!]
    \centering
    \includegraphics[width=0.50\linewidth]{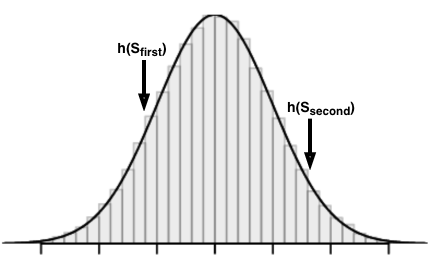}
    \caption{Illustration of how two inferences drawn by the inference function $h$ using the same feature coalition $S$ may yield different results, as the underlying sampling process is governed by a probability distribution.}
    \label{fig:distribution-illustration}
\end{figure}

\begin{proposition}
    $\shapS$ satisfies symmetry (Axiom \ref{axiom:symmetry}).
\end{proposition}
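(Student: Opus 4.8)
The plan is to reinterpret the symmetry hypothesis for the stochastic setting and then mirror the classical swapping-bijection proof, upgrading equalities of \emph{values} to equalities of \emph{distributions}. In the probabilistic regime the natural reading of ``$x_i$ and $x_j$ contribute equally to all coalitions'' is that the two payoff distributions coincide, i.e. $\mathcal{D}(S \cup \{x_i\}) = \mathcal{D}(S \cup \{x_j\})$ for every $S \subseteq X \setminus \{x_i, x_j\}$. The target conclusion is then that the random attributions $\shapS(x_i)$ and $\shapS(x_j)$ are equal in distribution, and in particular that $\mathbb{E}[\shapS(x_i)] = \mathbb{E}[\shapS(x_j)]$. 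First I would stress why symmetry survives where efficiency failed: efficiency broke because two independent redraws of a single coalition \emph{inside one summation} failed to cancel, whereas symmetry only compares two separately formed marginal-contribution sums, so no within-sum cancellation is ever required.

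Next I would set up the bijection from the deterministic proof. Let $\sigma$ act on subsets of $X$ by transposing $x_i$ and $x_j$; restricted to $S \subseteq X \setminus \{x_i\}$ it is a bijection onto subsets $\sigma(S) \subseteq X \setminus \{x_j\}$ with $|\sigma(S)| = |S|$, so the weight $w(S) = \frac{|S|!\,(|X|-|S|-1)!}{|X|!}$, which depends only on $|S|$, is preserved. I would then pair the summand indexed by $S$ in $\shapS(x_i)$ with the summand indexed by $\sigma(S)$ in $\shapS(x_j)$, and verify case by case that the \emph{with} and \emph{without} coalitions match in distribution. When $x_j \notin S$ we have $\sigma(S) = S$: the two \emph{without} draws are literally $h(S)$, while the two \emph{with} draws $h(S \cup \{x_i\})$ and $h(S \cup \{x_j\})$ are identically distributed by the hypothesis. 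When $x_j \in S$, writing $S = S_0 \cup \{x_j\}$, the two \emph{with} coalitions both equal $S_0 \cup \{x_i, x_j\}$ (identical distribution), while the two \emph{without} coalitions $S_0 \cup \{x_j\}$ and $S_0 \cup \{x_i\}$ match by the hypothesis applied to $S_0$.

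The delicate step, and the one I expect to be the main obstacle, is upgrading this term-by-term distributional matching to equality in distribution of the two \emph{full} sums. Because no caching is used, every $h(\cdot)$ evaluation is a fresh, mutually independent draw, so $\shapS(x_i)$ and $\shapS(x_j)$ are each fixed linear combinations (with coefficients $\pm w(S)$) of independent random variables. The bijection matches these coefficients exactly and pairs each underlying draw with an identically distributed one; since both families are independent, the matched collections share the same joint law, and applying the common linear map yields equality in distribution of the sums, whence $\mathbb{E}[\shapS(x_i)] = \mathbb{E}[\shapS(x_j)]$. As a shorter alternative route to the expectation statement alone, I would note that $\mathbb{E}[\shapS(x_i)]$ equals the deterministic Shapley value of the mean-payoff function $\mu(S) := \mathbb{E}_{d \sim \mathcal{D}(S)}[d]$, and that the hypothesis forces $\mu(S \cup \{x_i\}) = \mu(S \cup \{x_j\})$, so classical symmetry applied to $\mu$ closes the argument at the level of means.
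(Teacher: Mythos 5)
Your proof is correct, but it proves a genuinely different statement than the paper does, because you and the paper read the symmetry antecedent differently. The paper treats the hypothesis $h(S \cup \{x_i\}) = h(S \cup \{x_j\})$ as an \emph{empirical observation about the realized draws}: if the numbers actually produced happen to satisfy these equalities, then the computed attributions coincide exactly, and the proof is just Shapley's original argument with $v$ replaced by the observed $h$. You instead lift the hypothesis to the level of distributions, $\mathcal{D}(S \cup \{x_i\}) = \mathcal{D}(S \cup \{x_j\})$, and conclude equality \emph{in law} of the two random attributions (and hence of their expectations), via the standard transposition bijection plus independence of all draws. Your case analysis and the final step (matching coefficients, identically distributed independent draws, same linear map, hence same law) are sound, and the shortcut through the mean-payoff game $\mu(S) = \mathbb{E}_{d \sim \mathcal{D}(S)}[d]$ is a clean way to get the expectation version. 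What your route buys is robustness: the paper's realized-value argument implicitly needs the two independent re-evaluations of the shared coalition $S$ (one in $x_i$'s inner loop, one in $x_j$'s) to return the same number so that the paired terms cancel exactly, which the stated antecedent does not literally guarantee under independent redraws; your distributional reading needs only identical laws, not identical realizations. (One small wording slip on your side: those two \emph{without} evaluations are not ``literally $h(S)$'' as a single value but two independent draws from $\mathcal{D}(S)$ --- harmless for your argument, since identical distribution is all you use.) What the paper's route buys is a stronger, sample-level conclusion ($\shapS(x_i) = \shapS(x_j)$ exactly, not just in distribution) under a correspondingly stronger, observation-conditional premise, which is the ``if observed, then guaranteed'' framing the authors adopt explicitly in the discussion following their proof.
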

\begin{proof}
    For the Shapley value feature attribution $\shapS$ (Algorithm \ref{algo:shapley}), if for two features $x_{i}, x_{j} \in X$ we observe:
    \[
        h(S \cup \{x_{i}\}) = h(S \cup \{x_{j}\}) \quad \forall S \subseteq X \setminus \{x_{i}, x_{j}\},
    \]
    then $\shapS(x_{i})$ = $\shapS(x_{j})$.
    This is the standard Shapley symmetry axiom (Axiom \ref{axiom:symmetry}): equal marginal contribution for $x_{i}$ and $x_{j}$ across all coalitions imply equal value. Since $\shapS$ is exactly the Shapley value computed on the characteristic function $h$, the original proof (from \cite{shapley1953}) applies with $v$ replaced by $h$.
\end{proof}
Conceptually, the axiom's antecedent amounts to the assertion of an empirical observation: if the measured effect (over the function $h$) of adding $x_{i}$ equals that of adding $x_{j}$ for every coalition we evaluate, then the attribution method assigns them equal credit. We are not asserting that the condition is probable in real-world scenarios, only that if it would be observed, the stated implication is guaranteed by the Shapley value construction.

\begin{proposition}
    $\shapS$ satisfies the Shapley axiom Null player (dummy, Axiom~\ref{axiom:dummy}).
\end{proposition}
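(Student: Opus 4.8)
The plan is to reduce the claim to a term-by-term argument, mirroring the structure of the symmetry proof above. First I would write $\shapS(x_i)$ as the defining permutation-weighted sum, i.e., \autoref{eq:original_shapley_value} with $v$ replaced by the characteristic function $h$:
\[
    \shapS(x_i) = \sum_{S \subseteq X \setminus \{x_i\}} \frac{|S|!\,(|X|-|S|-1)!}{|X|!}\,\bigl[h(S \cup \{x_i\}) - h(S)\bigr].
\]
The null-player hypothesis states exactly that $h(S \cup \{x_i\}) = h(S)$ for every $S \subseteq X \setminus \{x_i\}$, so each observed marginal contribution $h(S \cup \{x_i\}) - h(S)$ is zero.

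Second, I would substitute this into the sum: every summand now carries a factor that is identically zero, so the entire weighted sum vanishes regardless of the (finite, nonnegative) permutation weights. Hence $\shapS(x_i) = 0$, which is the desired conclusion. Since $\shapS$ is precisely the Shapley value computed on $h$, this is simply the classical null-player proof of \cite{shapley1953} with $v$ replaced by $h$.

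The step that deserves attention --- and the only place the stochasticity of $h$ could plausibly intrude --- is the contrast with the efficiency case. There, the failure arose because a single coalition $S$ appears once with a positive and once with a negative sign across the marginals of \emph{different} features, evaluated as two independent draws that need not agree. Here, by contrast, each difference $h(S \cup \{x_i\}) - h(S)$ is built from its own pair of evaluations, and the hypothesis asserts precisely that this pair coincides; no cancellation across distinct draws of the same coalition is required. Each term is individually zero, so the argument goes through without any telescoping, and the stochastic nature of $h$ is harmless. As with symmetry, I would read the antecedent as an empirical observation over the evaluations actually used, rather than a population-level property of $\mathcal{D}$, and note that the implication then holds with probability one whenever the antecedent is observed.
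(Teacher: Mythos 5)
Your proof is correct and takes essentially the same route as the paper: the null-player hypothesis makes every marginal contribution $h(S \cup \{x_i\}) - h(S)$ vanish term by term, so the weighted sum is zero, and the classical argument of \cite{shapley1953} carries over with $v$ replaced by $h$. Your closing remarks on why stochasticity is harmless here (no cancellation across independent redraws is needed) and on reading the antecedent as an empirical observation also match the paper's own commentary following its proof.
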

\begin{proof}
    For the Shapley value approach $\shapS$ (Algorithm~\ref{algo:shapley}), if for some feature $x \in X$ we observe:
    \[
        h(S \cup \{x\}) = h(S) \quad \forall S \in X \setminus \{x\},
    \]
    then $\shap(x) = 0$. For the Shapley \emph{dummy} axiom (Axiom \ref{axiom:dummy}), we have that if every marginal contribution of $x$ is zero, then every term in the summation for $x$ vanishes, yielding $\shap(x) = 0$. Once more, $\shapS$ is the exact Shapley value over the function $h$, so the original proof (from \cite{shapley1953}) applies.
\end{proof}
Again, the premise is conditional: if we empirically find that adding $x$ never changes $h$, then the attribution must be zero. This frames the axiom as an ``if observed, then guaranteed'' property rather than a universal claim (such as \emph{efficiency}).

We next turn to the sliding-window approximation $\shapSW$, which unlike $\shapS$ and $\shapCS$, changes the game itself.
\begin{proposition}
    $\shapSW$ violates the Shapley axioms efficiency \ref{axiom:efficiency} and symmetry \ref{axiom:symmetry}.
\end{proposition}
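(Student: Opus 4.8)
The plan is to treat both violations structurally rather than probabilistically. Since any deterministic $h$ is a special case of the stochastic setting (a point mass at a single value), it suffices to exhibit one \emph{deterministic} characteristic function on which $\shapSW$ fails both axioms. This isolates the failure as a consequence of the windowing-and-averaging construction itself, not of stochastic redraws, and lets us reuse the exact Shapley identities from \cite{shapley1953}.

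First I would record the structural fact that drives everything. Fix a window $W_i$ with the outside features $X_{out} = X \setminus W_i$ held present throughout. Then the inner loop of Algorithm~\ref{algo:sliding-window} computes, for each $x \in W_i$, exactly the Shapley value of $x$ in the restricted game $g_i(T) = h(T \cup X_{out})$ on the player set $W_i$. Consequently two standard facts hold at the window level: local efficiency, $\sum_{x \in W_i} \mathrm{local}_i(x) = g_i(W_i) - g_i(\emptyset) = h(X) - h(X \setminus W_i)$, and local symmetry, i.e. if $x_i, x_j \in W_i$ are interchangeable in $g_i$ then their local values coincide. The final attribution then divides each feature's accumulated local values by $\mathrm{count}(x)$, the number of windows containing $x$, and this count is \emph{not} uniform: for stride-$1$ windows of size $w < n$, boundary features lie in fewer windows than interior ones.

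The heart of the argument is that this feature-dependent normalization breaks the global telescoping underlying efficiency and the interchangeability underlying symmetry. For efficiency I would sum the window-level identities above and observe that, because interior features are down-weighted by their larger counts while boundary features are not, the windowed totals do not collapse to $h(X) - h(\emptyset)$; the cleanest route is to display the nonzero residual on the smallest nontrivial instance, $n = 3$ with $w = 2$ (windows $\{x_1,x_2\}$ and $\{x_2,x_3\}$, so $\mathrm{count}(x_1)=\mathrm{count}(x_3)=1$ and $\mathrm{count}(x_2)=2$), choosing a game with genuine interaction (a purely additive $h$ satisfies efficiency and would not serve).

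For symmetry I would use the same instance and pick $h$ so that two features are globally symmetric yet positioned asymmetrically with respect to the windows, e.g. $x_1$ and $x_2$ with $h(\{x_1\})=h(\{x_2\})$ and $h(\{x_1,x_3\})=h(\{x_2,x_3\})$. The subtle point, and the part I expect to be the main obstacle, is that global symmetry forces local symmetry inside the single window containing both features, so the discrepancy cannot originate there; it must come entirely from $x_2$'s membership in the extra window $\{x_2,x_3\}$ together with its larger count. Concretely, the game with $h(\emptyset)=h(\{x_1\})=h(\{x_2\})=h(\{x_3\})=h(\{x_1,x_3\})=h(\{x_2,x_3\})=0$ and $h(\{x_1,x_2\})=h(X)=1$ makes $x_1,x_2$ symmetric yet yields $\shapSW(x_1)=\tfrac12 \neq \tfrac34 =\shapSW(x_2)$, and simultaneously gives $\sum_{x\in X}\shapSW(x)=\tfrac54 \neq 1 = h(X)-h(\emptyset)$, so a single explicit game witnesses both violations. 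The only care needed is to keep the $h(\emptyset)$ baseline convention of Section~\ref{section:axiomatic-analysis} consistent throughout and to verify the window memberships, counts, and weights exactly.
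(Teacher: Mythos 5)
Your proof is correct and takes essentially the same route as the paper's: an explicit deterministic counterexample game, together with the remark that deterministic games are a special case of the stochastic setting, witnessing both violations at once. Your instance is smaller ($n=3$, $w=2$ versus the paper's $n=4$, $w=2$), and your numbers check out ($\shapSW(x_1)=\tfrac12$, $\shapSW(x_2)=\tfrac34$, $\shapSW(x_3)=0$, so the total is $\tfrac54\neq 1=h(X)-h(\emptyset)$ while the symmetric pair $x_1,x_2$ receives unequal credit). One substantive difference worth flagging: your local values are computed faithfully to line~\ref{line:add-fixed-features} of Algorithm~\ref{algo:sliding-window}, i.e., with the outside-window features included via $S'=S\cup X_{out}$, whereas the paper's own arithmetic evaluates $h$ on subsets of the window alone (e.g., crediting $a$ and $b$ with $\tfrac12$ each in $W_1$ from $h(\{a,b\})-h(\{b\})$ rather than $h(\{a,b,c,d\})-h(\{b,c,d\})$); the stated violations persist under either reading, but your computation matches the algorithm as written.
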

\begin{proof}
    Let $X = \{a,b,c,d\}$, with a window size of 2:
    $W_{1} = \{a,b\}, W_{2} = \{b,c\},$ and $W_{3} = \{c,d\}$.
    We define a deterministic game:
    \begin{itemize}
        \item $h(\{a,b\}) = 1$,
        \item $h(\{b,c\}) = 1$, 
        \item $h(\{a,b,c,d\}) = 2$, 
        \item and $h(S) = 0$ for every other coalition $S \subseteq X$.
    \end{itemize}
    The sliding window-based attribution function $\shapSW_{w=2}$ computes the local Shapely value for each feature, then averages over the windows.
    From this game we get the \emph{local} values: 
    \begin{itemize}
        \item In $W_{1} = \{a,b\}$: $a$ and $b$ each get $1/2$.
        \item In $W_{2} = \{b,c\}$: $b$ and $c$ each get $1/2$.
        \item In $W_{3} = \{c,d\}$: $c$ and $d$ each get $0$.
    \end{itemize}
    Averaging across windows that contain each feature yields:
    $\shapSW_{w=2}(a) = 1/2$, $\shapSW_{w=2}(b) = 1/2$, $\shapSW_{w=2}(c) = 1/4$, and $\shapSW_{w=2}(d) = 0$.
    
    \vspace{5pt} 
    \noindent \textbf{Efficiency is violated.} $h(X) = 2$, but $\sum_{x \in X} \shapSW_{w=2}(x) = 1/2 + 1/2 + 1/4 + 0 = 5/4 \neq 2$.
    
    \vspace{5pt} 
    \noindent \textbf{Symmetry is violated.} In the original game, $a$, and $c$ are symmetric, yet $\shapSW_{w=2}(a) = 1/2 \neq 1/4 = \shapSW_{w=2}(c)$.
    
    \vspace{5pt} 
    Since deterministic games are a special case of probabilistic games; the same conclusions follow for a non-deterministic $h$.
\end{proof}

\begin{proposition}
    $\shapSW$ satisfies the Shapley axiom Null player (dummy) \ref{axiom:dummy}.
\end{proposition}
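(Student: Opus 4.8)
The plan is to reuse the same ``if observed, then guaranteed'' template already applied to the dummy axiom for $\shapS$, adapting it to the windowed construction. First I would unpack what $\shapSW$ actually computes for a fixed feature $x$: Algorithm~\ref{algo:sliding-window} iterates over every window $W_i$ that contains $x$, and within each such window it forms a local Shapley term by summing weighted marginal contributions $R_{\text{with}} - R_{\text{without}} = h(S' \cup \{x\}) - h(S')$ over all $S \subseteq W_i \setminus \{x\}$, where $S' = S \cup (X \setminus W_i)$. The final attribution $\shapSW(x)$ is the average of these local values over the windows containing $x$.

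The key observation is that every augmented coalition $S'$ arising in this computation is itself a subset of $X \setminus \{x\}$. Indeed, since $x \in W_i$, the outside set $X_{out} = X \setminus W_i$ does not contain $x$, and $S \subseteq W_i \setminus \{x\}$ excludes $x$ as well; hence $S' = S \cup X_{out} \subseteq X \setminus \{x\}$. Consequently, the dummy premise $h(S \cup \{x\}) = h(S)$ for all $S \subseteq X \setminus \{x\}$ applies verbatim to each such $S'$, yielding $h(S' \cup \{x\}) = h(S')$ and therefore $R_{\text{with}} - R_{\text{without}} = 0$ for every term that appears.

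From here the conclusion is immediate: each local Shapley value for $x$ is a weighted sum of these vanishing marginal contributions and so equals zero, and the average of zeros over the windows containing $x$ is again zero, giving $\shapSW(x) = 0$. I would close, as in the $\shapS$ case, by noting that the premise is read as an empirical observation over the draws of $h$, so the statement is an ``if observed, then guaranteed'' property that holds whether $h$ is deterministic or stochastic.

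I do not anticipate a genuine obstacle here; the only points requiring care are the bookkeeping that $S' \subseteq X \setminus \{x\}$, so that the dummy premise is legitimately invoked at the level of the enlarged coalitions rather than the within-window coalitions, and confirming that at least one window contains $x$ (which holds whenever $w \le n$), so that the final averaging step is well-defined and does not divide by zero.
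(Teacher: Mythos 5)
Your proof is correct and follows essentially the same route as the paper's: show that the null-player hypothesis transfers to the restricted game on each window containing $x$, conclude each local Shapley value is zero, and average. If anything, your bookkeeping is slightly more precise than the paper's, since you apply the dummy premise to the augmented coalitions $S' = S \cup (X \setminus W_i)$ that the algorithm actually evaluates, whereas the paper states the premise for $T \subseteq W \setminus \{x\}$ and leaves the step to $h(T \cup X_{out})$ implicit.
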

\begin{proof}
    Assume $x \in X$ does not contribute to any coalition: $h(S) = h(S \cup \{x\})$ for all $S \subseteq X \setminus \{x\}$. For any window $W$ containing $x$ and any $T \subseteq W \setminus \{x\}$ we have $T \subseteq X \setminus \{x\}$, hence $h(T) = h(T \cup \{x\})$. Thus $x$ is a null player in the restricted game on $W$, so its Shapley value in $W$ is zero (Axiom~\ref{axiom:dummy}).
    By Algorithm~\ref{algo:sliding-window}, $\shapSW(x)$ is the average of the per-window Shapley values over the windows containing $x$, hence $\shapSW(x) = 0$.
\end{proof}

Much like the sliding-window approach $\shapSW$, the counterfactual approach $\shapC$ changes the underlying game, so some standard Shapley guarantees may fail.
\begin{proposition}
    $\shapC$ violates the Shapley axiom efficiency \ref{axiom:efficiency}.
\end{proposition}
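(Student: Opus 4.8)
The plan is to follow the same strategy as the preceding proof for $\shapSW$: exhibit a single deterministic game on which the counterfactual attributions fail to sum to $h(X) - h(\emptyset)$. Because deterministic games are a special case of the stochastic setting (as already noted at the end of the $\shapSW$ proof), a deterministic counterexample immediately rules out any efficiency guarantee for $\shapC$ in general, and no probabilistic reasoning is needed.

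First I would record the general identity obtained by summing the counterfactual attributions directly from \autoref{equation:counterfactual}:
\[
    \sum_{x \in X} \shapC(x) = \sum_{x \in X}\big(h(X) - h(X \setminus \{x\})\big) = |X|\,h(X) - \sum_{x \in X} h(X \setminus \{x\}).
\]
Efficiency, interpreted relative to the baseline $h(\emptyset)$ exactly as in the $\shapCS$ proposition, would require this quantity to equal $h(X) - h(\emptyset)$. Comparing the two expressions shows the equality can hold only for a special (essentially additive) class of games in which the grand-coalition payoff decomposes cleanly across single-feature removals; a game with genuine synergy among features breaks it.

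Second, I would instantiate this with a concrete synergistic game. The cleanest choice is minimal: take $X = \{a,b\}$ with $h(\emptyset) = h(\{a\}) = h(\{b\}) = 0$ and $h(\{a,b\}) = 1$. Then $\shapC(a) = h(\{a,b\}) - h(\{b\}) = 1$ and, symmetrically, $\shapC(b) = 1$, so $\sum_{x \in X} \shapC(x) = 2$, whereas $h(X) - h(\emptyset) = 1$. Since $2 \neq 1$, efficiency is violated. Alternatively, reusing the four-feature game from the $\shapSW$ proof yields $\shapC(x) = h(X) - h(X \setminus \{x\}) = 2$ for every $x$ and a sum of $8 \neq 2$, which keeps continuity with the previous proposition.

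The step requiring the most care is not the arithmetic but the framing: the counterexample must be genuinely synergistic, since for an additive game the counterfactual sum and the baseline-relative grand-coalition value coincide and the claimed violation would vanish. Conceptually, the discrepancy arises because each feature's removal is measured against the \emph{full} coalition, so any interaction shared by several features is charged to each of them simultaneously and thereby overcounted; this is precisely the double-counting that the permutation weighting in \autoref{eq:original_shapley_value} is designed to eliminate.
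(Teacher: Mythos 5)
Your proposal is correct and uses essentially the same argument as the paper: the paper's proof is exactly your two-feature counterexample with $h(\emptyset) = h(\{a\}) = h(\{b\}) = 0$ and $h(\{a,b\}) = 1$, giving $\sum_{x \in X} \shapC(x) = 2 \neq 1 = h(X) - h(\emptyset)$. The additional general identity and the discussion of additivity are fine but not needed; the paper goes straight to the counterexample.
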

\begin{proof}
    To see that the counterfactual $\shapC$ can violate efficiency, it suffices to give a counter example where $\sum_{x \in X} \shapC(x) \neq h(X) - h(\emptyset)$.
    
    Let $X =\{a,b\}$ with $h(\emptyset) = 0$, $h(\{a\}) = h(\{b\}) = 0$, and $h(\{a,b\}) = 1$.
    Then, by Equation~\ref{equation:counterfactual}, $\shapC(a) = 1 - 0 = 1$, $\shapC(b) = 1 - 0 = 1$, $\sum_{x \in X} \shapC(x) = 2 \neq 1 = h(X) - h(\emptyset)$.
\end{proof}

Intuitively, $\shapC$ measures the effect of removing a single feature from the grand coalition. If features $x_{i}$ and $x_{j}$ behave identically, their effects match; if $x_{i}$ does not change anything, the effect is zero.
\begin{proposition}
    $\shapC$ satisfies the Shapley axioms symmetry \ref{axiom:symmetry} and Null player (dummy) \ref{axiom:dummy}.
\end{proposition}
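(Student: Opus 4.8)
The plan is to exploit the fact that, unlike the full Shapley value, the counterfactual depends on only two coalition values: by Equation~\ref{equation:counterfactual}, $\shapC(x) = h(X) - h(X \setminus \{x\})$ involves solely $h(X)$ and $h(X \setminus \{x\})$. Consequently, although the premises of both axioms quantify over \emph{all} coalitions $S$, I only need to instantiate each premise at a single, carefully chosen coalition and then read off the conclusion directly from the definition. No averaging or telescoping over the coalition lattice is required, in sharp contrast to the efficiency argument.

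For Null player (dummy, Axiom~\ref{axiom:dummy}), I assume $h(S \cup \{x\}) = h(S)$ for every $S \subseteq X \setminus \{x\}$ and specialize to the maximal choice $S = X \setminus \{x\}$ (which is itself a subset of $X \setminus \{x\}$). Since $(X \setminus \{x\}) \cup \{x\} = X$, this one instance yields $h(X) = h(X \setminus \{x\})$, whence $\shapC(x) = h(X) - h(X \setminus \{x\}) = 0$, which is exactly the dummy conclusion.

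For symmetry (Axiom~\ref{axiom:symmetry}), I assume $h(S \cup \{x_i\}) = h(S \cup \{x_j\})$ for every $S \subseteq X \setminus \{x_i, x_j\}$ and specialize to $S = X \setminus \{x_i, x_j\}$. The set-algebra identities $(X \setminus \{x_i, x_j\}) \cup \{x_i\} = X \setminus \{x_j\}$ and $(X \setminus \{x_i, x_j\}) \cup \{x_j\} = X \setminus \{x_i\}$ convert this instance into $h(X \setminus \{x_j\}) = h(X \setminus \{x_i\})$. Subtracting both sides from the common value $h(X)$ then gives $\shapC(x_i) = h(X) - h(X \setminus \{x_i\}) = h(X) - h(X \setminus \{x_j\}) = \shapC(x_j)$.

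The main obstacle is essentially bookkeeping rather than a genuine difficulty: one must check that each chosen coalition genuinely lies in the index set over which the premise is quantified, and that the set-algebra simplifications adding a deleted element back are correct. Because $\shapC$ touches only the grand coalition and its single-feature deletions, both conclusions hold unconditionally once their respective (observed) premises are granted — mirroring the ``if observed, then guaranteed'' character noted earlier for $\shapS$, and unlike the universal failure of efficiency.
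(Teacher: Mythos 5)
Your proposal is correct and follows essentially the same route as the paper: both arguments read the conclusion directly off the definition $\shapC(x) = h(X) - h(X \setminus \{x\})$ by applying the axiom's premise at the single relevant coalition. The only difference is that you make explicit which instance of the universally quantified premise is being used ($S = X \setminus \{x\}$ for dummy, $S = X \setminus \{x_i, x_j\}$ for symmetry), whereas the paper's proof leaves this instantiation implicit.
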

\begin{proof}
    Since the attribution, using $\shapC$, is obtained by a single marginal contribution of feature $x \in X$ with respect to the grand coalition we get:

    \vspace{5pt} 
    \noindent \textbf{Symmetry.} $\shapC(x_{i}) = h(X) - h(X \setminus \{x_i\}) = h(X) - h(X \setminus \{x_j\}) = \shapC(x_{j})$, for any two symmetrical $x_{i}$ and $x_{j}$.
    
    \vspace{5pt} 
    \noindent \textbf{Null player (dummy).} $\shapC(x) = h(X) - h(X \setminus \{x\}) = 0$ for all $x \in X$ that contributes nothing to any coalition (which there is only one of since we only use the grand coalition).
\end{proof}

\section{Computational Complexity Analysis}
\label{section:complexity-analysis}
We analyze the computational costs in relation to the number of LLM inference calls of the three attribution functions $\shapS$, $\shapCS$, and $\shapSW$. Let $n = |X|$ be the number of features.
Each coalition evaluation computes $h(S)$ where the LLM query typically dominates runtime. Post-processing to combine coalition values into attribution scores is linear and negligible compared to LLM inference time.

    \vspace{5pt} 
    \noindent \textbf{$\shapS$.} The summation for each $x \in X$ runs over all $S \subseteq X \setminus \{x\}$ (see line~\ref{line:for-each-feature} in Algorithm~\ref{algo:shapley}). If we do not reuse coalition results across features, we evaluate both $h(S)$ and $h(S \cup \{x\})$ via LLM inference calls for every term, yielding $\bigO(2 * 2^{n-1} * n) = \bigO(2^{n}n)$.
    
    If instead one computes all unique coalition values once and reuses them, the LLM inference call count is reduced.

    \vspace{5pt} 
    \noindent \textbf{$\shapCS$.} Replace every LLM coalition query by a cache lookup, keyed by an order-invariant representation of $S$, and populating on miss (see lines~\ref{line:start-cache-lines} - \ref{line:end-cache-lines} in Algorithm~\ref{alg:inference}). This computes the LLM response for each coalition once (see line~\ref{line:inference-call} in Algorithm~\ref{alg:inference}), yielding $\bigO(2^{n})$.
    
    Still, the coalition count grows exponentially. To combat this, the sliding-window approach only computes the coalitions among the in-window features.
    
    \vspace{5pt} 
    \noindent \textbf{$\shapSW$.} For a stride of 1 and a window $W \subseteq X$ of size $w$, there are $n_{w} = n - w + 1$ windows. Within a window, $\shapSW$ compute Shapley values on the restricted game by holding $X \setminus W$ fixed (the features not in the current window, see line~\ref{line:init-fixed-features} and \ref{line:add-fixed-features} in Algorithm~\ref{algo:sliding-window}), then averaging the per-window scores for each feature, yielding $\bigO(2^{w}n_{w})$.
    
    Hence, the number of coalitions is not dependent on the number of features $n$, but only the fixed window size $w$ (see line~\ref{line:local-coalitions} in Algorithm~\ref{algo:sliding-window}).

\section{Empirical Evaluation}
\label{section:empirical-evaluation}
We augment our theoretical analysis with a basic empirical assessment of how heuristics-based attributions trade-off faithfulness to the Shapley value and computation time.
Specifically, we assess how closely our attribution methods ($\shapC, \shapCS, \shapSW$) match the standard Shapley value ($\shapS$), and we analyze their runtime profiles.
For the former, we compare the resulting feature-attribution vectors  to those from the standard Shapley value computation using \emph{cosine similarity}; for the latter, we measure the wall-clock time each method requires.

Our evaluation pipeline is straightforward. We use the \emph{OpenAI} API with the \emph{gpt-4.1-mini} model with a temperature of $0.2$.
For this work, investigating optimal temperature settings is out of scope.
We restrict evaluation to a single LLM because the experiments primarily serve to illustrate our theoretical results.
For each instance, we query the LLM with the full input prompt (full feature set) to obtain a \emph{base} answer.
We then form coalitions according to each method, re-query the LLM with the coalition-specific prompt, and embed both the \emph{coalition-specific} answer and the \emph{base} answer with the \emph{all-MiniLM-L6-v2} sentence transformer model, which we denote by $E$.
Intuitively, if a coalition removes information that the model actually uses, its answer should drift away from the base answer in the embedding space.
We quantify that drift by cosine similarity between the two embeddings.
Lower similarity indicates a larger semantic drift in the output.
We formalize the comparison via the similarity function $\kappa$, as also used in \cite{goldshmidt2024tokenshap, goldshmidt2025pixelshap, wang2025multishap}:
\[
    \kappa(R_{X}, R_{S}) = \frac{E(R_{X}) \cdot E(R_{S})}{||E(R_{X})|| \cdot ||E(R_{S})||},
\]
where $R_{X}$ is the model's response to the full prompt (all features in $X$ included) and $R_{S}$ is the model's response to the prompt where only the features in the coalition $S \subseteq X$ are included.

We use the \emph{disease-symptom-description} dataset from Kaggle \cite{disease-dataset}, which provides mappings between symptoms and corresponding diseases. Each record lists a set of observed symptoms associated with a particular condition. In our setup, we treat each symptom as a feature and construct LLM prompts that describe a hypothetical patient's condition. We achieve this by parsing each record into a structured prompt of the form ``A patient is showing the following symptom(s): [list of symptoms]. Based on these symptom(s), what disease or condition do you think they most likely have?'', which serves as the input for the inference and attribution procedures. We do not claim that our results support this high-risk use case of LLMs, rather, the use case merely serves as an illustrating example.

The resulting attribution vector for each method ($\shapC, \shapCS, \shapSW$) is then compared to the attribution vector of the standard Shapley value approach $\shapS$, which we use as a \emph{gold standard}, using \emph{cosine similarity}. 
Intuitively, we want faster attributions while still maintaining results that closely match those produced by the more computationally demanding standard Shapley value $\shapS$.

The resulting similarity results are presented in Figure~\ref{fig:similarity-based-results} and the time-based results are presented in Figure~\ref{fig:time-based-results}.
In Figure~\ref{fig:similarity-per-feature-count}, we observe that the cache-based method $\shapCS$ maintains the most stable similarity to the standard Shapley value across different feature counts, indicating that it yields consistent attributions regardless of input size.
In contrast, the counterfactual method $\shapC$ fluctuates notably, suggesting that its results are more sensitive to the number of features and data. 
The sliding-window approach $\shapSW_{w=3}$ yields stability levels between those of $\shapC$ and $\shapCS$, which is expected since its window size of $3$ conceptually sits between their respective coalition scopes.
In Figure~\ref{fig:time-based-results}, the results reflect the expected growth patterns of the methods: both the standard Shapley value $\shapS$ and its cache-based variant $\shapCS$ exhibit exponential scaling with the number of features, while the counterfactual $\shapC$ and sliding-window $\shapSW$ methods grow approximately linearly.
This aligns with their theoretical computational complexity, where $\shapC$ and $\shapSW$ restrict the coalition space to fixed-size subsets.
The slight drop in the sliding-window curve toward the end is likely due to network/API-level latency variability and the fact that response length can differ across prompts which may increase or decrease runtime.
\begin{figure*}[!ht]
    \centering
    \subfloat[\emph{Cosine similarity} between the attribution vectors
    averaged over all data points.\label{fig:average-similarity}]{
        \includegraphics[width=0.48\textwidth]{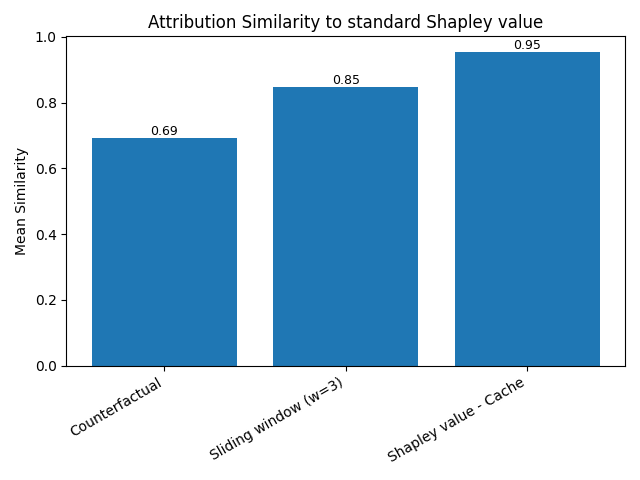}
    }
    \hfill
    \subfloat[\emph{Cosine similarity} between the attribution vectors
    averaged over the number of data points with each feature count. Dotted lines indicate results obtained under ``deterministic'' conditions (with \emph{temperature}=0 and \emph{seed}=42), 
    whereas solid lines correspond to non-deterministic conditions (with \emph{temperature}=0.2 and no fixed seed).
    \label{fig:similarity-per-feature-count}]{
        \includegraphics[width=0.48\textwidth]{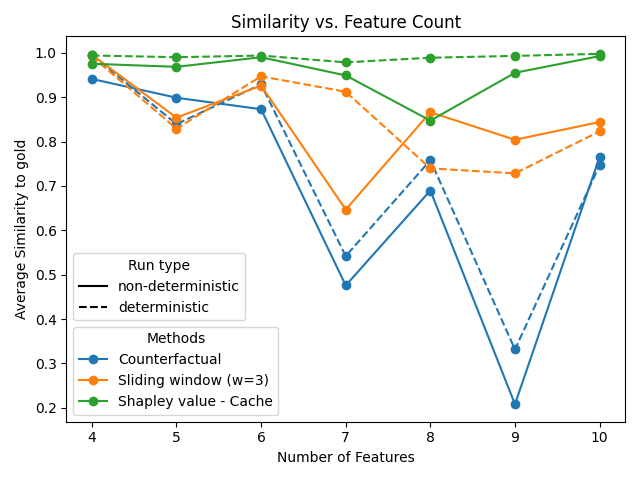}
    }
    \caption{\emph{Cosine similarity} between the attribution vectors of the
    standard Shapley value $\shapS$ (gold standard) and the counterfactual
    $\shapC$, sliding window (window size of 3) $\shapSW_{w=3}$, and the
    cached-based Shapley value $\shapCS$.}
    \label{fig:similarity-based-results}
\end{figure*}
\begin{figure*}[!ht]
    \centering
    \subfloat[Normal scale y-axis.\label{fig:normal-scale-y}]{
        \includegraphics[width=0.48\textwidth]{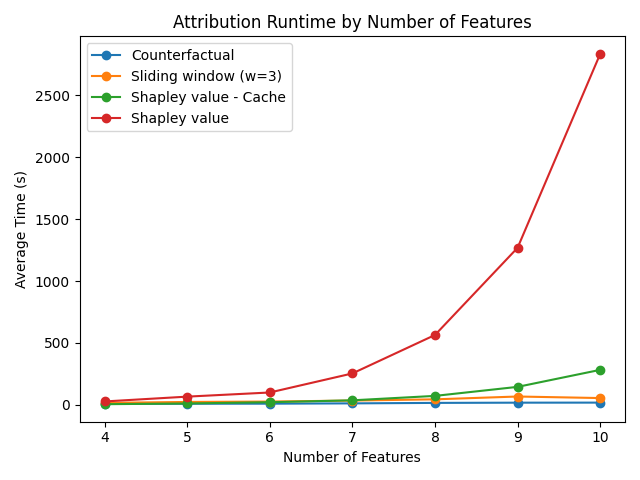}
    }
    \hfill
    \subfloat[Log scaled y-axis.\label{fig:log-scale-y}]{
        \includegraphics[width=0.48\textwidth]{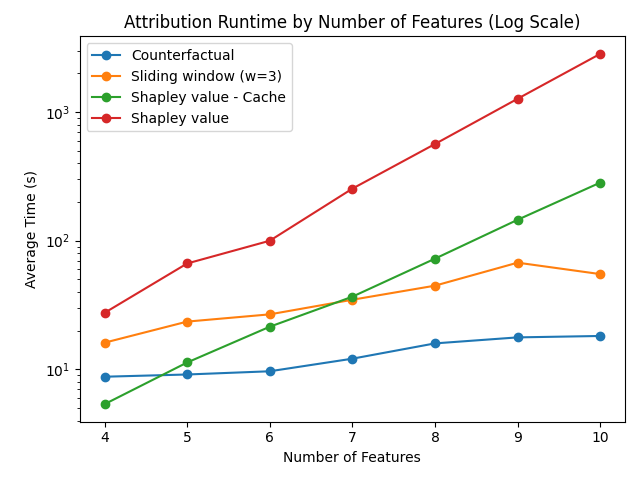}
    }
    \caption{Average runtime (seconds) for 4--10 features, based on two runs per feature count.}
    \label{fig:time-based-results}
\end{figure*}
%

\section{Related Work}
\label{section:related-work}
The SHAP framework \cite{lundberg2017SHAP} unifies a range of feature attribution methods under the axiomatic principles of the Shapley value, providing a theoretically grounded approach to model interpretability. 
Building on this foundation, TokenSHAP \cite{goldshmidt2024tokenshap} extends Shapley value-based explanations to LLMs by attributing importance to individual tokens through \emph{Monte Carlo} sampling, improving computational efficiency and alignment with human intuition. 
However, TokenSHAP's coalitional sampling remains unstable (fundamentally random) and the paper does not explicitly analyze how the Shapley axioms are affected under these circumstances.
In contrast, llmSHAP takes a more principled approach by formalizing how LLM-decoding stochasticity affects Shapley axioms and by proposing deterministic variants that restore axiomatic guarantees. 
Two closely related lines show how general this Shapley value view has become.
MultiSHAP \cite{wang2025multishap} applies Shapley interaction ideas to multimodal models in order to attribute not just to single features but to cross-modal pairs (image patch and text token), which is conceptually similar to llmSHAP's goal of making feature sets/concepts explainable, but it is specialized to multimodal fusion. ConceptSHAP \cite{NEURIPS2020conceptSHAP} instead tackles the problem of identifying and ranking a complete set of concepts. 
In contrast, llmSHAP assumes that such features/concepts/spans are already identified and shows how to attribute them even when inference is stochastic something TokenSHAP's token-only approach does not support.

Taken together, this highlights the contribution of this work, which is to systematically analyze and ensure Shapley principle compliance in the context of non-deterministic LLM inference.

\section{Conclusions}
\label{section:conclusion}
In this paper, we have conducted an analysis of formal and empirical properties of different variants of Shapley value-based attribution approaches.
From this analysis, we can derive an overview of design choices---in the form of trade-offs---that engineers need to make when applying Shapley values to LLM explainability:
\begin{itemize}
  \item Fixing the value function by caching the results of LLM inference calls (using $\shapCS$) guarantees Shapley value principle satisfaction and speeds-up computation. However, it may give users a distorted impression of inference behavior that is intuitively ``less stochastic'' than in the actual application.
  \item Sampling the LLM repeatedly for each coalition lets us estimate its expected payoff and reduce decoding noise, improving stability of the attribution. However, it does not restore efficiency under stochastic redraws. 
  Intuitively, repeated sampling smooths randomness by replacing each coalition payoff with an empirical mean over multiple redraws, so attributions target the expected payoff rather than a single draw.
  This might stabilize values but does not restore efficiency.
  \item Likewise, one must choose the appropriate coalition structure: considering all features $\shapS$, focusing on single-feature effects $\shapC$, or adopting an intermediate compromise $\shapSW$. Applying a sliding window over the features speeds up computation at the cost of (heuristic) principle satisfaction. The smaller the window size, the faster the explanations can be computed. However, smaller window sizes are not associated with proportionally worse approximation, which indicates that making trade-offs in this regard may be compelling.
\end{itemize}
We encourage future work that investigates how the above trade-offs affect the crucially important human-computer interaction aspect of LLM explainability.

Our approach centers on \emph{input feature attribution}, which inherently requires multiple inference rounds.
While this is tractable for many inference procedures, it becomes increasingly challenging for reasoning-oriented LLMs, where each inference step can be computationally expensive (since this includes the internal ``thinking'').
Still, the principles behind our framework extend naturally beyond raw input features. 
One can view the internal reasoning steps of an LLM (its \emph{chain-of-thought}) as higher level ``features'' (cf. \cite{bogdan2025thoughtanchorsllmreasoning}).
Applying attribution analysis on this level could reveal how much each internal reasoning step contributes to the final outcome, similarly to how input features contribute to the final output.

\subsubsection*{Acknowledgments}
\label{section:acknowledgments}
This work was partially supported by the Wallenberg AI, Autonomous Systems and Software Program (WASP) funded by the Knut and Alice Wallenberg Foundation.

\printbibliography
\end{document}